\pgfplotsset{compat=1.16}  
\newtheorem{theorem}{Theorem}
\theoremstyle{definition}
\newtheorem*{assumption*}{\assumptionnumber}
\providecommand{\assumptionnumber}{}
\newcommand*\rel@kern[1]{\kern#1\dimexpr\macc@kerna}
\newcommand*\widebar[1]{%
  \begingroup
  \def\mathaccent##1##2{%
    \rel@kern{0.8}%
    \overline{\rel@kern{-0.8}\macc@nucleus\rel@kern{0.2}}%
    \rel@kern{-0.2}%
  }%
  \macc@depth\@ne
  \let\math@bgroup\@empty \let\math@egroup\macc@set@skewchar
  \mathsurround\z@ \frozen@everymath{\mathgroup\macc@group\relax}%
  \macc@set@skewchar\relax
  \let\mathaccentV\macc@nested@a
  \macc@nested@a\relax111{#1}%
  \endgroup
}
\DeclareMathOperator*{\argmin}{argmin}
\def\R{\mathbb{R}}
\def\E{\mathbb{E}}
\def\P{\mathbb{P}}
\def\cC{\mathcal{C}}
\def\cI{\mathcal{I}}
\def\cX{\mathcal{X}}
\def\cY{\mathcal{Y}}
\def\ind#1{\mathds{1}\left\{#1\right\}}
\def\lhat{\hat\lambda}
\newcommand{\revision}[1]{\textcolor{black}{#1}}
\begin{document}

%

%

\twocolumn[

\aistatstitle{Automatically Adaptive Conformal Risk Control}

\aistatsauthor{ Vincent Blot \And Anastasios N. Angelopoulos}
\aistatsaddress{ Paris-Saclay University, CNRS, \\ Laboratoire Interdisciplinaire des Sciences du Numerique, \\ 91405,
Orsay, France,  \\ Capgemini Invent France
\And  University of California, Berkeley}

\aistatsauthor{   Michael I. Jordan \And Nicolas J-B. Brunel }
\aistatsaddress{University of California, Berkeley, \\ INRIA Paris
\And ENSIIE, \\ 1 Square de la Resistance, \\ 91000, Evry-Courcouronnes, France, \\ Capgemini Invent France } ]

\begin{abstract}
  Science and technology have a growing need for effective mechanisms that ensure reliable, controlled performance from black-box machine learning algorithms.
  These performance guarantees should ideally hold \emph{conditionally on the input}---that is the performance guarantees should hold, at least approximately, no matter what the input.
  However, beyond stylized discrete groupings such as ethnicity and gender, the right notion of conditioning can be difficult to define.
  For example, in problems such as image segmentation, we want the uncertainty to reflect the intrinsic difficulty of the test sample, but this may be difficult to capture via a conditioning event.
  Building on the recent work of~\cite{gibbs2023conformal}, we propose a methodology for achieving approximate conditional control of statistical risks---the expected value of loss functions---by adapting to the difficulty of test samples.
  Our framework goes beyond traditional conditional risk control based on user-provided conditioning events to the algorithmic, data-driven determination of appropriate function classes for conditioning.
  We apply this framework to various regression and segmentation tasks, enabling finer-grained control over model performance and demonstrating that by continuously monitoring and adjusting these parameters, we can achieve superior precision compared to conventional risk-control methods.
\end{abstract}

\section{INTRODUCTION}
\label{sec:introduction}

Conformal prediction~\citep{vovk2005algorithmic} has emerged over the last several years as a promising solution for quantifying uncertainty in black-box machine learning models via prediction sets.
Conformal risk control~\citep{angelopoulos2024conformal} extends the conformal methodology to high-dimensional and structured data tasks, such as image segmentation, where the standard notion of coverage does not naturally apply.
These techniques are especially attractive due to their model- and distribution-agnostic nature; their validity does not rely on any assumptions about the model class at hand or the particular data distribution~\citep{vovk2005algorithmic}.
A limitation of classical conformal techniques, however, is its inability to provide conditional guarantees.
Thus, the quality of the uncertainty quantification can depend on the input covariates and degrade in some parts of the input space, especially where data is scarce, even if the average quality of uncertainty quantification is controlled.

While conditional guarantees are impossible in full generality for any algorithm~\citep{vovk2012conditional}, recent progress has been made on tractable relaxations of conditional coverage.
In particular, \cite{gibbs2023conformal} introduce an extension of conformal prediction that gives exact coverage conditionally on overlapping groups, and additionally, can provide a relaxed form of conditional coverage against certain covariate shifts parameterized by a user-chosen function class $\mathcal{F}$.
For a non-expert user, however, specifying $\mathcal{F}$ can be hard, and in many prediction tasks, there is no clear choice even for the expert user.
Indeed, there are many tasks for which users do not have any conditioning events in mind, but rather, simply want their uncertainty to adapt automatically to the difficulty of the test sample.

In this paper, we introduce a procedure--- \emph{automatically adaptive conformal risk control (AA-CRC)}---that involves two innovations: (1) it obviates the need to pick a function class $\mathcal{F}$ in~\cite{gibbs2023conformal} by providing a theoretically motivated algorithm for selection of $\mathcal{F}$, and (2) it extends the arguments of~\cite{gibbs2023conformal} for conformal prediction to conformal risk control.
We also extend~\cite{gibbs2023conformal} to handle label-conditional coverage.
Auto-adaptive CRC thus adapts more carefully to the difficulty of the input sample, and the resulting uncertainty better reflects the true errors of the model.
As an important practical side effect, AA-CRC generally has substantially better statistical power than conformal prediction or conformal risk control alone.
For an example of this improved performance, see Figure~\ref{fig:teaser}. The code is available at \url{https://github.com/vincentblot28/multiaccurate-cp} and all datasets used for the experiments are open source.

\begin{figure*}[t]
    \centering
    \includegraphics[width=.8\textwidth]{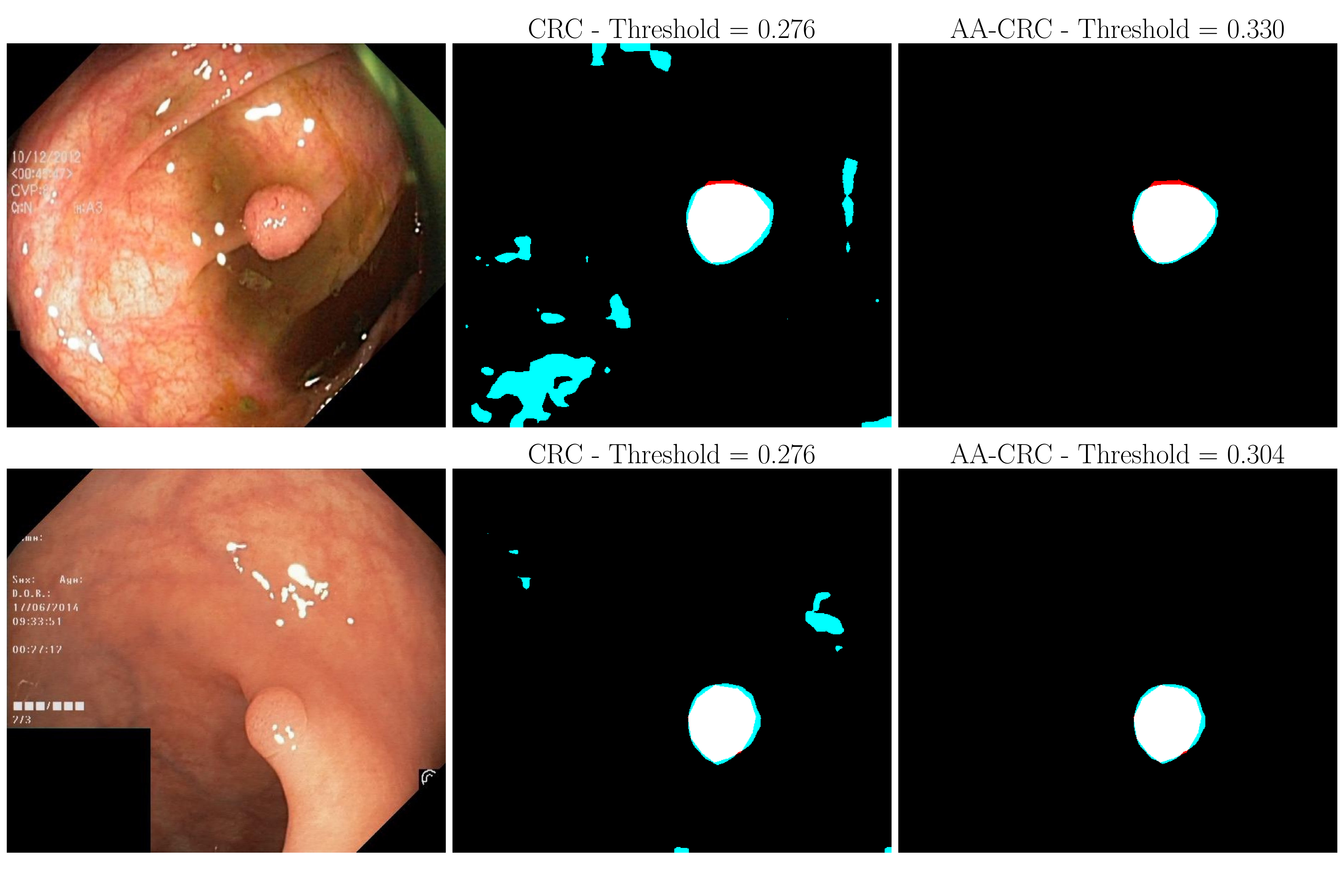}
    \caption{Example of polyp segmentations with conformal risk control (CRC) and our methodology (AA-CRC), where the true positive pixels are in white and the false positives in blue. To guarantee the recall on the image, our method outputs a threshold equal to 0.304 and 0.330 while the constant threshold of the CRC methodology is 0.276. This difference implies a higher precision for our methodology.}
    \label{fig:teaser}
\end{figure*}

\subsection{Problem statement}
Consider a dataset of exchangeable feature-label pairs, $(X_1, Y_1), \ldots, (X_{n+1},Y_{n+1}) \in \cX \times \cY$, where the last label $Y_{n+1}$ is our target (an unknown quantity we want to predict).
Consider a set-valued predictor $\cC_{\lambda}(x)$, indexed by $\lambda$.
We would like this set to have a low risk---or expected loss---as measured by a loss function $\ell(\cC_{\lambda}(x), y)$.
An example of a loss function is the false negative rate in multilabel classification: $\ell(\cC_{\lambda}(x), y) = \frac{|y \setminus \cC_{\lambda}(x)| }{|y|}$.
Conformal risk control, as defined in~\cite{angelopoulos2024conformal}, offers guarantees of the form
\begin{equation}
    \label{eq:standard-crc}
    \E\left[ \ell(\cC_{\hat{\lambda}}(X_{n+1}), Y_{n+1})\right] \leq \alpha,
\end{equation}
provided that $\ell$ is monotone nonincreasing when viewed as a function of $\lambda$.
The goal of our work is to extend the above guarantee analogously to (2.3) of~\cite{gibbs2023conformal}:
\begin{equation}
    \label{eq:multiaccuracy}
    \E\left[ \frac{\lambda(X_{n+1})}{\E[\lambda(X_{n+1})]}\left(\ell(\cC_{\hat{\lambda}(X_{n+1})}(X_{n+1}), Y_{n+1}) - \alpha \right) \right] \leq 0,
\end{equation}
for any non-negative $\lambda \in \Lambda$, where $\Lambda$ is some class of functions that map $\cX$ to $\R$.
Following~\cite{gibbs2023conformal}, the choice of function class $\Lambda$ will determine the type of multiaccuracy guarantees we are able to achieve.

To better understand the guarantee in~\eqref{eq:multiaccuracy}, we give several examples.
\begin{enumerate}
    \item When $\Lambda = \{ x \mapsto 1 \}$, we recover standard, marginal conformal risk control, and~\eqref{eq:multiaccuracy} becomes equivalent to~\eqref{eq:standard-crc}.
    \item Let $\Phi$ map $\cX$ to a $d$-dimensional binary vector. One can think of $\Phi(x)$ as a vector of group indicators. When $\Lambda = \{ \Phi(x)^\top \theta : \theta \in \R^d \} $, we obtain group-conditional conformal risk control with overlapping groups:
    \begin{equation}
        \begin{split}
            \E\left[ \ell(\cC_{\hat{\lambda}(X_{n+1})}(X_{n+1}), Y_{n+1}) \mid \Phi(X_{n+1}) = j \right] \leq \alpha,\\
            \qquad \forall j \in [d].
        \end{split}
    \end{equation}
    \item Let $\Phi$ be a $d$-dimensional neural network embedding of $X$, and let $\Lambda = \{ \Phi(x)^\top \theta : \theta \in \R^d \} $. Then our method provides a risk control guarantee over a set of covariate shifts:
    \begin{equation}
        \E_{\lambda}\left[ \ell(\cC_{\hat{\lambda}(X_{n+1})}(X_{n+1}), Y_{n+1}) \right] \leq \alpha, \qquad \forall \lambda \in \Lambda,
    \end{equation}
    where $\E_\lambda$ is defined as the expected value when \revision{tilted by $\frac{\lambda(X_{n+1})}{\E[\lambda(X_{n+1})]}$, as in~\eqref{eq:multiaccuracy}.}
    In other words, our guarantee is robust to all covariate shifts that are linear in embedding space.
    \revision{It is by learning $\Phi$ to provide features that result in a high-accuracy linear predictor of the error on $X_{n+1}$ that we achieve our ``automatic'' notion of conditional validity.}
\end{enumerate}

\subsection*{Related work}
We study the topic of conformal prediction~\citep{vovk2005algorithmic} under relaxed notions of conditional coverage.
There is a large volume of work on conformal prediction and conditional coverage, most notably the foundational works of~\cite{vovk2012conditional}, ~\cite{barber2021limits}.
Also foundational are the works of~\cite{jung2021moment,jung2022batch,bastani2022practical}, which introduce the relationship between conformal prediction, quantile regression, and conditional coverage on discrete but overlapping groups.
These papers introduce the idea in a high-probability, split-conformal sense; it is developed by~\cite{gibbs2023conformal} in a full-conformal, marginal sense over possibly continuous groups, as in the present manuscript.
We remark that the guarantee in~\eqref{eq:multiaccuracy} resembles the multi-accuracy guarantee in (1) of~\cite{kim2019multiaccuracy}, although the mathematical tools we use are unrelated, as far as we know.
The closest ancestors of our work are~\cite{gibbs2023conformal} and~\cite{angelopoulos2024conformal}.
Our paper combines the guarantees from these two lines of work.
As we will soon see, combining these approaches is not trivial, and stems from a new reframing of conformal risk control as the solution to an implicit optimization problem.
An additional novelty as compared to~\cite{gibbs2023conformal} is suggesting an automatic algorithm for selecting the function class $\Lambda$ in order to achieve better general purpose conditional performance---this is critical, as there is no clear choice of $\Lambda$ in many practical problems, so this improves upon the practical value of~\cite{gibbs2023conformal} even in the standard conformal setup.
Along the same lines, we extend the guarantee of~\cite{gibbs2023conformal} to handle label-conditional coverage, and more generally allow $\mathcal{F}$ to be a class of mappings that depend on both the covariate and the label.
Finally, we refer the reader to the related concurrent work of~\cite{zhang2024fair} on fair risk control; their problem setting is similar to ours, while their algorithms and guarantees are different but complementary to ours.
\section{THEORY}

\subsection{Background}
We begin by reinterpreting conformal prediction in the language of the first-order optimality conditions of standard quantile regression~\citep{koenker1978regression}.
Let $D^y = ( (X_1,Y_1), \ldots, (X_n,Y_n), (X_{n+1},y))$ denote a putative dataset where the $(n+1)$st label is replaced with the putative label $y$.
Let $s : \cX \times \cY \to \R$ be a conformal score.
Also, let $\cC_{\lambda}(x) = \{ y : s(x,y) \leq \lambda \}$ be the split-conformal prediction set formed \revision{using threshold} $\lambda$.
Finally, let \revision{$\rho(u) = (1 - \alpha)u \ind{u \geq 0} -\alpha u \ind{u < 0}$} be the pinball loss.

The first step to understanding our approach is to reframe conformal prediction as a form of intercept-only quantile regression.
Let
\revision{
\begin{equation}
    J(\lambda, D^y) = \sum_{i=1}^{n} \rho(s(X_i,Y_i)-\lambda) + \rho(s(X_{n+1},y)-\lambda),
\end{equation}}
and let $\hat\lambda^y = \argmin_{\lambda \in \Lambda} J(\lambda, D^y)$. 
As in~\cite{gibbs2023conformal}, it is straightforward to verify that the standard split-conformal prediction set is formed as
\begin{equation}
    \cC(X_{n+1}) = \{ y : s(X_{n+1},y) \leq \hat\lambda^y \}.
\end{equation}

In addition to the procedure being equivalent to a form of quantile regression, the coverage guarantee can be rephrased in the language of the first-order conditions of quantile regression as well.
As in any optimization problem, the first-order optimality condition states that $0 \in \partial J(\hat\lambda^y, D^y)$.
Accordingly, for all $i \in [n+1]$, we define \revision{$g_i$ to be the subgradient function of $\lambda \mapsto \rho(s(X_i,Y_i)-\lambda)$} characterized as follows:
\begin{enumerate}
    \item $g_i(\lambda) = \ind{Y_i \notin \cC_{\lambda}(X_i)} - \alpha $ if $\lambda \neq s(X_i,Y_i)$.
    \item $g_{n+1}(\lambda) = \ind{y \notin \cC_{\lambda}(X_{n+1})} - \alpha$ if $\lambda \neq s(X_i, y)$.
    \item $g_i(\lambda) \in [-\alpha, 1-\alpha]$.
    \item \revision{$\sum\limits_{i=1}^{n+1} g_i(\hat\lambda^y) = 0$}.
\end{enumerate}
Using these constraints, and defining $\cI = \{ i : \lhat = s(X_i,y) \}$, we have:
\begin{equation}
    \begin{aligned}
    \MoveEqLeft{\sum\limits_{i=1}^{n+1} g_i(\hat\lambda^{Y_{n+1}}) = 0} \\
    \Longleftrightarrow & \frac{1}{n+1}\sum\limits_{i=1}^{n+1} \ind{Y_i \notin \cC_{\lhat}(X_i)} - \alpha = \\
    & \frac{1}{n+1}\sum\limits_{i \in \cI} (\ind{Y_i \notin \cC_{\lhat}(X_i)} - \alpha - g_i(\lhat^{Y_{n+1}})).
\end{aligned}
\end{equation}

Note that for all $i \in \cI$, $\ind{Y_i \notin \cC_{\lhat}(X_i)} = 0$, and $g_i(\lhat^{Y_{n+1}}) \geq -\alpha$.
Thus, the right-hand side of the displayed equation is nonpositive, implying that $\frac{1}{n+1}\sum\limits_{i=1}^{n+1} \ind{Y_i \notin \cC_{\lhat}(X_i)} \leq \alpha$.
The standard conformal argument completes the proof: 
\begin{equation}
    \begin{split}
        \P(Y_{n+1} \notin \cC(X_{n+1})) = \P(Y_{n+1} \notin \cC_{\lhat^{Y_{n+1}}}(X_{n+1})) = \\
        \E\left[ \frac{1}{n+1}\sum\limits_{i=1}^{n+1} \ind{Y_i \notin \cC_{\hat\lambda^{Y_{n+1}}}(X_i)}\right] \leq \alpha.
    \end{split}
\end{equation}

The work of~\cite{gibbs2023conformal} extends the above argument beyond intercept-only quantile regression; roughly, the idea is to define a vector space of functions $\Lambda$ whose elements map $\cX$ to $\R$, and then repeat the argument above.
We omit the details here, since the argument will be clear from the proof of our main theorem.

\subsection{Main results}
\label{sec:main}

We now build up to our main result, which is analogous to Theorem 3 of~\cite{gibbs2023conformal}.
The main difference is that we do not provide a conditional coverage guarantee, but rather, a conditional risk control guarantee.
Furthermore, we handle function classes that depend on both the input and output: $\Lambda = \{ (x,y) \mapsto \lambda(x,y) \}$.
This allows the methodology to capture both group-conditional and label-conditional coverage.

Consider a nested family of sets, $\cC_{u}(x)$, indexed by $u$.
Let $\ell : \cY \times 2^{\cY} \to [0,1]$ be a left-continuous and monotone nonincreasing loss function:
\begin{equation}
    \cC_1 \subseteq \cC_2 \implies \ell(y, \cC_1) \geq \ell(y, \cC_2).
\end{equation}
For convenience we will abuse notation to write $\ell(x, y, u) = \ell(y, \cC_u(x))$.
We also define a related indefinite integral (or antiderivative), $I : \cX \times \cY \times \R \to \R$, as
\begin{equation}
    I(x,y,u) = \int (\ell(x,y,u') - \alpha) du'\Big\vert_u.
\end{equation}
\revision{In the above equation, $u'$ is a dummy variable used for the purpose of calculating the antiderivative, and we evaluate the antiderivative at the value $u$.}
Because $\ell$ is a monotone loss function, we are guaranteed that $I$ is a quasiconvex function; this will pose some interesting challenges for forming the prediction set, as we will soon see.
As an additional challenge, unlike the case of the pinball loss, this indefinite integral can not be computed analytically in general.

We now define the following functions, analogously to the previous section: 
\begin{equation}
    \begin{split}
        J^y(\lambda) = \frac{1}{n+1} \sum_{i=1}^{n} I(X_i, Y_i, \lambda(X_i,Y_i)) +\\
        \frac{1}{n+1} I(X_{n+1}, y, \lambda(X_{n+1},y)) + \mathcal{R}(\lambda),
    \end{split}
\end{equation}
where $\mathcal{R} : \Lambda \to \R$ is a regularizer,
\begin{equation}
    \lhat^y = \argmin_{\lambda \in \Lambda} J^y(\lambda),
\end{equation}
and
\begin{equation}
    \cC(x) = \cC_{\sup_{y \in \cY}\lhat^y(x)}(x).
\end{equation}
Then the set $\cC(X_{n+1})$ has the following guarantee.
\begin{theorem}
    \label{thm:main-validity}
    Consider a vector space $\Lambda$ equipped with the standard addition operation, and assume that for all $\lambda, \lambda' \in \Lambda$, the derivative $\epsilon \mapsto \mathcal{R}(\lambda + \epsilon \lambda')$ exists.
    If $\lambda$ is nonnegative and $\E[\lambda(X_{n+1},Y_{n+1})] > 0$, then 
    \begin{equation}
        \begin{split}
            \E_{\lambda}[\ell(Y_{n+1}, \cC(X_{n+1}))] \leq \\
            \alpha - \frac{1}{\E[\lambda(X_{n+1},Y_{n+1})]} \E\left[\frac{d}{d\epsilon} \mathcal{R}(\lhat^{Y_{n+1}} + \epsilon \lambda) \Big\vert_{\epsilon = 0} \right].
        \end{split}
    \end{equation}
\end{theorem}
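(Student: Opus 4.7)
The plan is to extend the background argument of the excerpt from intercept-only quantile regression to the general setting where $\Lambda$ is a vector space of functions of $(x,y)$ and the pinball loss is replaced by the indefinite integral $I$. The three ingredients are (i) the first-order optimality condition for $\lhat^y$ in every direction $\lambda' \in \Lambda$, (ii) exchangeability once we set the putative label to $y = Y_{n+1}$, which renders $\lhat^{Y_{n+1}}$ a symmetric function of the full sample, and (iii) the monotonicity of $\ell$ to pass from the threshold $\lhat^{Y_{n+1}}(X_{n+1}, Y_{n+1})$ to the larger threshold $\sup_y \lhat^y(X_{n+1}, y)$ that defines $\cC$.

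First I would compute the directional derivative of $J^y$ at $\lhat^y$ along an arbitrary $\lambda' \in \Lambda$. Because $\ell$ is only right-continuous, $u \mapsto I(X_i, Y_i, u)$ may have kinks, so I would introduce subgradients $g_i^y$ mirroring the $g_i(\lambda)$ construction from the background section: each $g_i^y$ lies in the one-sided derivative interval of $u \mapsto I(X_i, Y_i, u)$ at $u = \lhat^y(X_i, Y_i)$, satisfies $g_i^y \in [-\alpha, 1-\alpha]$, and is bounded below by $\ell(X_i, Y_i, \lhat^y(X_i, Y_i)) - \alpha$ thanks to right-continuity. The optimality condition then reads
\begin{equation*}
\frac{1}{n+1}\sum_{i=1}^{n} g_i^y \lambda'(X_i, Y_i) + \frac{1}{n+1} g_{n+1}^y \lambda'(X_{n+1}, y) + \frac{d}{d\epsilon}\mathcal{R}(\lhat^y + \epsilon \lambda')\Big|_{\epsilon = 0} = 0,
\end{equation*}
with a single consistent choice of $(g_1^y, \ldots, g_{n+1}^y)$ valid for every $\lambda' \in \Lambda$.

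Specializing to $y = Y_{n+1}$ and $\lambda' = \lambda$ (the nonnegative direction in the theorem), the lower bound on $g_i^{Y_{n+1}}$ combined with $\lambda \geq 0$ yields
\begin{equation*}
\frac{1}{n+1}\sum_{i=1}^{n+1} \bigl(\ell(X_i, Y_i, \lhat^{Y_{n+1}}(X_i, Y_i)) - \alpha\bigr) \lambda(X_i, Y_i) \leq -\frac{d}{d\epsilon}\mathcal{R}(\lhat^{Y_{n+1}} + \epsilon \lambda)\Big|_{\epsilon = 0}.
\end{equation*}
Taking expectations and invoking exchangeability---the map $\lhat^{Y_{n+1}}$ is symmetric in $(X_1, Y_1), \ldots, (X_{n+1}, Y_{n+1})$, so every summand on the left has the same expectation---and then dividing through by $\E[\lambda(X_{n+1}, Y_{n+1})] > 0$ produces the desired tilted-expectation bound with $\lhat^{Y_{n+1}}(X_{n+1}, Y_{n+1})$ playing the role of the threshold. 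To close the argument, the definition of $\cC$ together with the monotonicity of $\ell$ gives $\ell(Y_{n+1}, \cC(X_{n+1})) \leq \ell(X_{n+1}, Y_{n+1}, \lhat^{Y_{n+1}}(X_{n+1}, Y_{n+1}))$, since $\sup_y \lhat^y(X_{n+1}, y) \geq \lhat^{Y_{n+1}}(X_{n+1}, Y_{n+1})$ and $\cC_u$ grows in $u$.

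The main obstacle will be the simultaneous subgradient selection across $\Lambda$: a single tuple $(g_1^{Y_{n+1}}, \ldots, g_{n+1}^{Y_{n+1}})$ must certify the first-order condition along every admissible direction of the (possibly infinite-dimensional) vector space $\Lambda$. This is precisely the subtlety the background section resolves for the pinball loss by characterizing the subgradient as a single scalar per sample; I would adapt the same recipe here, using that at the minimizer of $J^{Y_{n+1}}$ the pointwise one-sided derivatives of $I$ furnish an interval of admissible selections that can be consistently glued into a global choice.
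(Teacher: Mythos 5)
Your proposal follows essentially the same route as the paper's proof: the first-order optimality condition of $J^{Y_{n+1}}$ in the direction $\lambda$, a subgradient selection lower-bounded by $\ell(X_i,Y_i,\lhat^{Y_{n+1}}(X_i,Y_i))-\alpha$ via right-continuity and monotonicity, exchangeability of the symmetric minimizer to equate the summands' expectations, and the nesting $\cC(X_{n+1})\supseteq\cC_{\lhat^{Y_{n+1}}(X_{n+1},Y_{n+1})}(X_{n+1})$ to finish. The ``simultaneous subgradient selection across $\Lambda$'' you flag as the main obstacle is not actually needed: since the theorem fixes a single nonnegative direction $\lambda$, the paper only invokes the one-sided optimality condition along that one direction (defining the bracketing quantities $L_i(\lambda)$ and $U_i(\lambda)$ for that $\lambda$ alone), exactly as in your specialization step.
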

\begin{proof}
    Pick any $\lambda \in \Lambda$ and $\epsilon \in [0,1]$.
    For all $y \in \cY$, because $\lhat^y$ is a minimizer of $J^y$ and $\Lambda$, we have that the first-order optimality condition is satisfied.
    Thus, for $Y_{n+1}$,
    \begin{equation}
        0 \in \partial_{\epsilon} J^{Y_{n+1}}(\lhat^{Y_{n+1}} + \epsilon \lambda) \bigg|_{\epsilon = 0}.
    \end{equation}
    Now we will define any subgradients $g_i(\lambda)$ that satisfy a similar list of conditions as we previously defined in the proof of conformal prediction.
    We start by defining some useful functions. Let $L_i(\lambda) = \lambda(X_i,Y_i)(\ell(X_i,Y_i,\lhat^{Y_{n+1}}(X_i,Y_i))-\alpha)$, $U_i(\lambda) = \lim_{\epsilon \to 0^+}\lambda(X_i,Y_i)(\ell(X_i,Y_i,(\lhat^{Y_{n+1}} + \epsilon\lambda)(X_i,Y_i))-\alpha)$, and $r(\lambda) := \frac{d}{d\epsilon} \mathcal{R}(\lhat^{Y_{n+1}} + \epsilon \lambda) \Big\vert_{\epsilon = 0}$, for $i \in [n+1]$.
    Importantly, $U_i \geq L_i$ deterministically, since $(\lhat^{Y_{n+1}} + \epsilon\lambda)(X_i,Y_i) = \lhat^{Y_{n+1}}(X_i,Y_i) + \epsilon\lambda(X_i,Y_i)$ and $\lambda(X_i,Y_i) \geq 0$.
    Returning to the conditions for the subgradients, we pick any subgradients $g_1, \ldots, g_{n+1}$ satisfying
    
    \begin{enumerate}
        \item $g_i(\lambda) \in [L_i(\lambda), U_i(\lambda)]$.
        \item $\frac{1}{n+1}\sum\limits_{i=1}^{n+1} g_i(\lhat^{Y_{n+1}}) + r(\lambda) = 0$.
    \end{enumerate}
    Let 
    \begin{equation}
        \mathcal{I}_1 = \left\{ i \in [n+1] : L_i(\lambda) \neq U_i(\lambda) \right\}.
    \end{equation}
    Then, we can write
    \begin{align}
        \MoveEqLeft{\frac{1}{n+1}\sum\limits_{i=1}^{n+1} g_i(\hat\lambda^{Y_{n+1}}) + r(\lambda) = 0} \\
        \Longleftrightarrow & \frac{1}{n+1}\sum\limits_{i=1}^{n+1}\lambda(X_i,Y_i)(\ell(X_i,Y_i,\lhat^{Y_{n+1}}(X_i,Y_i)) - \alpha) \\
        & \qquad \qquad = \frac{1}{n+1}\sum\limits_{i \in \cI} \Big(L_i - g_i(\lhat^{Y_{n+1}})\Big) - r(\lambda).
    \end{align}
    But for all $i \in \cI$, $g_i(\lhat^{Y_{n+1}}) \geq L_i(\lambda)$.
    Thus, the right-hand side of the displayed equation is no greater than $-r(\lambda)$, implying that $\frac{1}{n+1}\sum\limits_{i=1}^{n+1} \ell(X_i,Y_i,\lhat^{Y_{n+1}}(X_i,Y_i)) -\alpha \leq - r(\lambda)$.
    
    Now we apply our standard exchangeability arguments.
    By exchangeability and the symmetry of $\lhat^{Y_{n+1}}$, we have that
    \begin{align}
        \MoveEqLeft{\E[\lambda(X_i,Y_i)(\ell(X_i,Y_i,\lhat^{Y_{n+1}}(X_i,Y_i)) - \alpha)]} \\
        &\hspace{-3em} = \E\left[\frac{1}{n+1}\sum\limits_{i=1}^{n+1} \lambda(X_i,Y_i)(\ell(X_i,Y_i,\lhat^{Y_{n+1}}(X_i,Y_i)) - \alpha) \right] \\
        & \leq -\E[r(\lambda)].
    \end{align}
    Thus, rearranging terms, $\E_\lambda[\ell(X_i,Y_i,\lhat^{Y_{n+1}}(X_i,Y_i))] \leq \alpha - \frac{1}{\E[\lambda(X_i,Y_i)]}\E[r(\lambda)]$.

    For the final conclusion, $\cC(X_{n+1}) \supseteq \cC_{\lhat^{Y_{n+1}}}$, by definition we have that 
    \begin{equation}
        \begin{split}
            \E_{\lambda}[\ell(Y_{n+1}, \cC(X_{n+1}))] \leq \E_{\lambda}[\ell(Y_{n+1}, \cC_{\lhat^{Y_{n+1}}}(X_{n+1}))] \\
        \leq \alpha - \frac{1}{\E[\lambda(X_i,Y_i)]}\E[r(\lambda)].
        \end{split}
    \end{equation}
        
\end{proof}

\subsection{Efficient computation of $\lhat$}

The procedure we have outlined thus far is analogous to full conformal risk control (see~\cite{angelopoulosnote}), in that we must loop over all values of $y \in \cY$ to calculate $\lhat$.
We have avoided including the model retraining as part of this procedure, but regardless, it may be impossible or infeasible to loop through $\cY$.

However, this can be avoided.
In particular, assume for all $\lambda \in \Lambda$ and all $(x,y)$ that $\lambda(x,y) \leq \nu(x)$.
\revision{For example, if $\Lambda$ is a class of bounded functions, then $\nu(x)$ can be set to the bound.}
Another special case is when $\lambda(x,y)$ does not depend on $y$, in which case $\nu$ exists trivially.
The following optimization problem also provides risk control, but does not require looping through $y \in \cY$:

\begin{equation}
    \begin{split}
    \tilde{\lambda} = \argmin_{\lambda \in \Lambda} \tilde{J}(\lambda) = \frac{1}{n+1} \sum_{i=1}^{n} I(X_i, Y_i, \lambda(X_i)) + \\ \frac{1-\alpha}{(n+1)}\lambda(X_{n+1}) + \mathcal{R}(\lambda).
    \end{split}
\end{equation}
To see why this algorithm provides risk control, assume for convenience that $\ell$ is continuous in its last argument.
Then, the first-order optimality of $\tilde\lambda$ implies that for all $\lambda$,

\begin{align*}
    & \frac{d}{d\epsilon} \tilde{J}(\tilde\lambda + \epsilon\lambda) \Big|_{\epsilon=0} = 0 \\
    \implies & \frac{d}{d\epsilon} \Bigg( 
        \frac{1}{n+1} \sum_{i=1}^{n} I(X_i, Y_i, \tilde\lambda(X_i)+\epsilon\lambda(X_i))  \\
        & \quad + \frac{1-\alpha}{(n+1)}(\tilde\lambda(X_{n+1}) + \epsilon\lambda(X_{n+1})) \\
        & \quad + \mathcal{R}(\tilde\lambda + \epsilon\lambda)
    \Bigg) \Bigg|_{\epsilon=0} = 0 \\
    \implies & \frac{1}{n+1} \sum\limits_{i=1}^{n} \lambda(X_i)(\ell(X_i,Y_i,\tilde\lambda(X_i)) -\alpha) \\
    & \quad + \frac{1-\alpha}{n+1} \lambda(X_{n+1})  \\
    & \quad = -\frac{\partial}{\partial \epsilon} \mathcal{R}(\tilde\lambda + \epsilon\lambda) \Big|_{\epsilon = 0} \\
    \implies & \frac{1}{n+1} \sum\limits_{i=1}^{n+1} \lambda(X_i)(\ell(X_i,Y_i,\tilde\lambda(X_i)) -\alpha) \\
    & \quad \leq -\frac{\partial}{\partial \epsilon} \mathcal{R}(\tilde\lambda + \epsilon\lambda) \Big|_{\epsilon = 0}.
\end{align*}

from which we can then continue on with the same exchangeability arguments in Theorem 3 to prove a risk-control bound.

We make some final observations about solving this optimization problem. 
In the absence of regularization, solving an optimization problem over $J(\lambda)$ is a quasiconvex optimization problem, and standard first-order methods can get stuck in saddle points or local minima.
However, a saddle point or local minimum is fine from the purpose of risk control---our analyses rely only on local first-order optimality conditions.
For maximum performance, it is best to escape the saddle points to find the global minimum; noisy gradient descent has been shown to be an effective method for this purpose~\citep{jin2017escape}.
All that said, in our experiments, we have never encountered a problem with the standard SciPy automatic solvers, such as \texttt{scipy.optimize.minimize}.
\section{RESULTS}

We have not yet discussed one of our main contributions: how do we pick $\Lambda$ automatically?
Our answer is straightforward: the preceding sections have shown that we should think of $\lhat(X_i,Y_i)$ as an error-prediction algorithm, much like the scorecaster of~\cite{angelopoulos2023conformalpid}.
We use this perspective to parameterize the function class to yield the best predictor possible.

For semantic segmentation tasks, we use a standard deep-learning approach in which we train a convolutional neural network that predicts the highest threshold (on the softmax of pixels) such that the risk (on this single image) is lower than $\alpha$.
We then slice off the last fully connected layer, and the resulting feature extractor becomes our $\Phi(x)$.
The class of functions $\Lambda = \{x \mapsto \Phi(x)^{\top}\theta \vert \theta \in \mathbb{R}^d\}$ is defined as the space of linear functions of this embedding.
This procedure is presented in Figure~\ref{fig:embedding_procedure}. 
This essentially amounts to a rigorous method for fine-tuning a fully-connected layer on a pretrained network backbone to provide risk-controlled estimates.

For tabular regression tasks, where neural networks are not the tool of choice~\citep{bad_nn}, we create an embedding with a Random Forest (RF)~\citep{breiman2001random}, building on the work of~\cite{amoukou2023adaptive} for computing adaptive predictive intervals.
Our Algorithm~\ref{alg:rf_train_calib} is analogous to theirs.
The main idea is to train a RF model to learn quantiles of the error distribution of the base model, and then to consider each leaf a group.
This procedure assigns to each observation as many groups as there are trees in the RF.
We then set $\Phi(x)$ to be the vector of group indicators, and $\Lambda$ to be the space of linear functions of $\Phi(x)$.

\subsection{Regression task}

Our first example is a simple simulation in the context of prediction intervals.
This experiment is primarily meant to visually showcase the automatic selection of groups.
In this setting, let $f$ be any regression model that takes as input $x \in \mathbb{R}$ and predicts $y \in \mathbb{R}$. 
The goal is to control the coverage of our prediction intervals; hence, our loss function will be defined as follows:
\begin{equation}
    \ell(x, y, \lambda) = \mathds{1}\{y \in \mathcal{C}_{\lambda(x,y)}(x)\}, \text{ where } \mathcal{C}_{u}(x) = [\hat{f}(x) \pm u]
\end{equation}
for all $u \in \R$.


\begin{algorithm}
\scriptsize
\caption{Random Forest Training and inference for automatic group creation}
\label{alg:rf_train_calib}
\begin{algorithmic}[1]
\REQUIRE $\mathcal{D}_{res} = \{(X_1, |y_1 - \hat{y}_1|), \ldots, (X_N, |y_N - \hat{y}_N|)\}$, $\mathcal{D}_{cal} = \{(X_1, y_1), \ldots, (X_m, y_m)\}$
\STATE Initialize RF $\leftarrow$ \textsc{RandomForest}
\STATE \textbf{Train} RF on $\mathcal{D}_{res}$
\FORALL{element $x$ in $\mathcal{D}_{cal}$}
    \STATE $G \gets [0]^{|\text{RF}|}$ \COMMENT{\(|RF|\) is the number of leafs in the forest}
    \FORALL{tree $T \in \text{RF}$}
        \FORALL{leaf $L \in T$}
            \IF{$x \in L$}
                \STATE $G[L] \gets 1$
            \ENDIF
        \ENDFOR
    \ENDFOR
\ENDFOR
\end{algorithmic}
\end{algorithm}

We use a simulated dataset from~\cite{cqr}. We used 2000 points for training, 1000 for the residual RF training, and 9000 for calibration and 5000 test points. Results are reported in Figure~\ref{fig:cqr}. The obtained marginal coverage is 0.897 with a target coverage  $1 - \alpha = 0.9$ and the coverage of each group varies between 0.886 and 0.913, which is within the expected fluctuations for a test set of this size. 

\begin{figure}[!ht]
    \centering
    \begin{subfigure}[b]{0.4\textwidth}
        \centering
        \includegraphics[width=\textwidth]{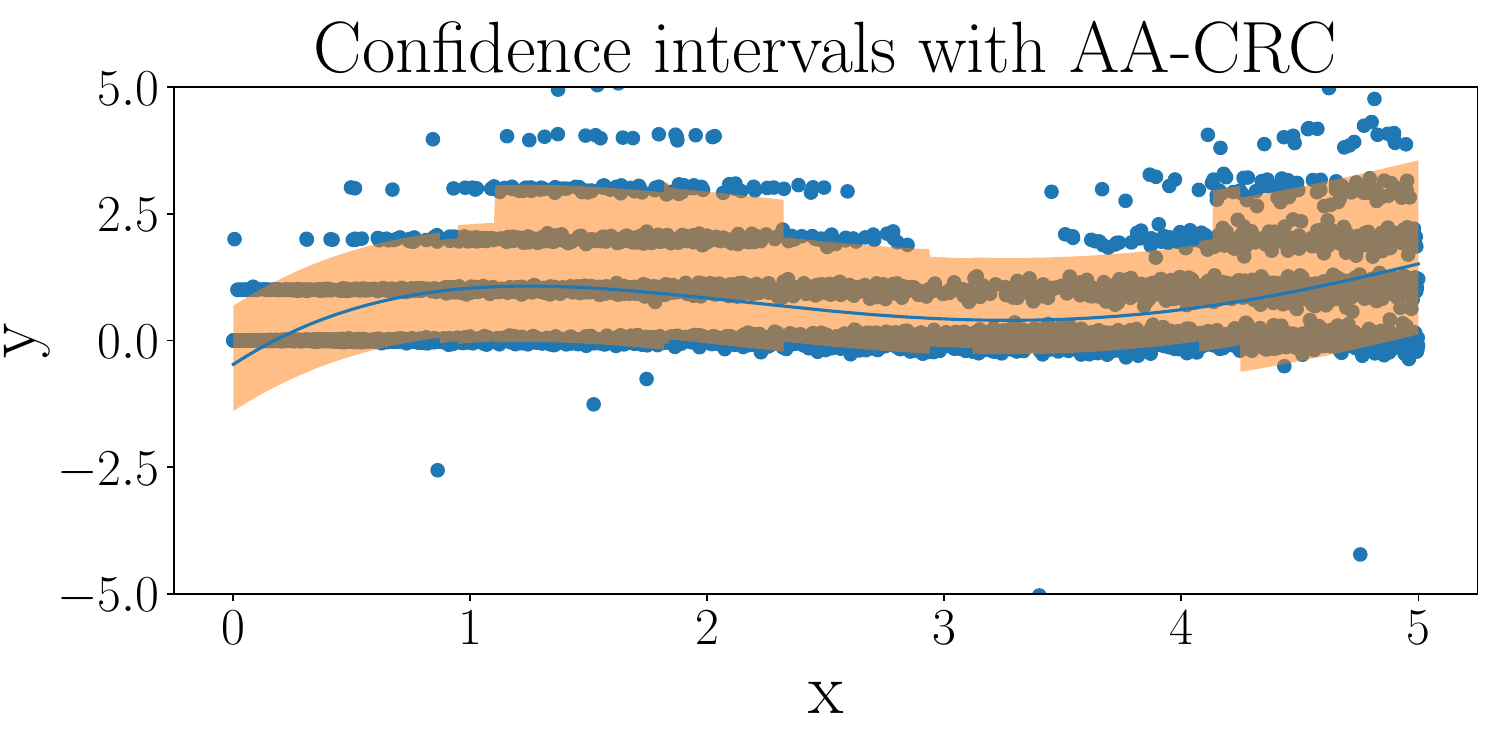}
    \end{subfigure}
    \hfill
    \begin{subfigure}[b]{0.4\textwidth}
        \centering
        \includegraphics[width=\textwidth]{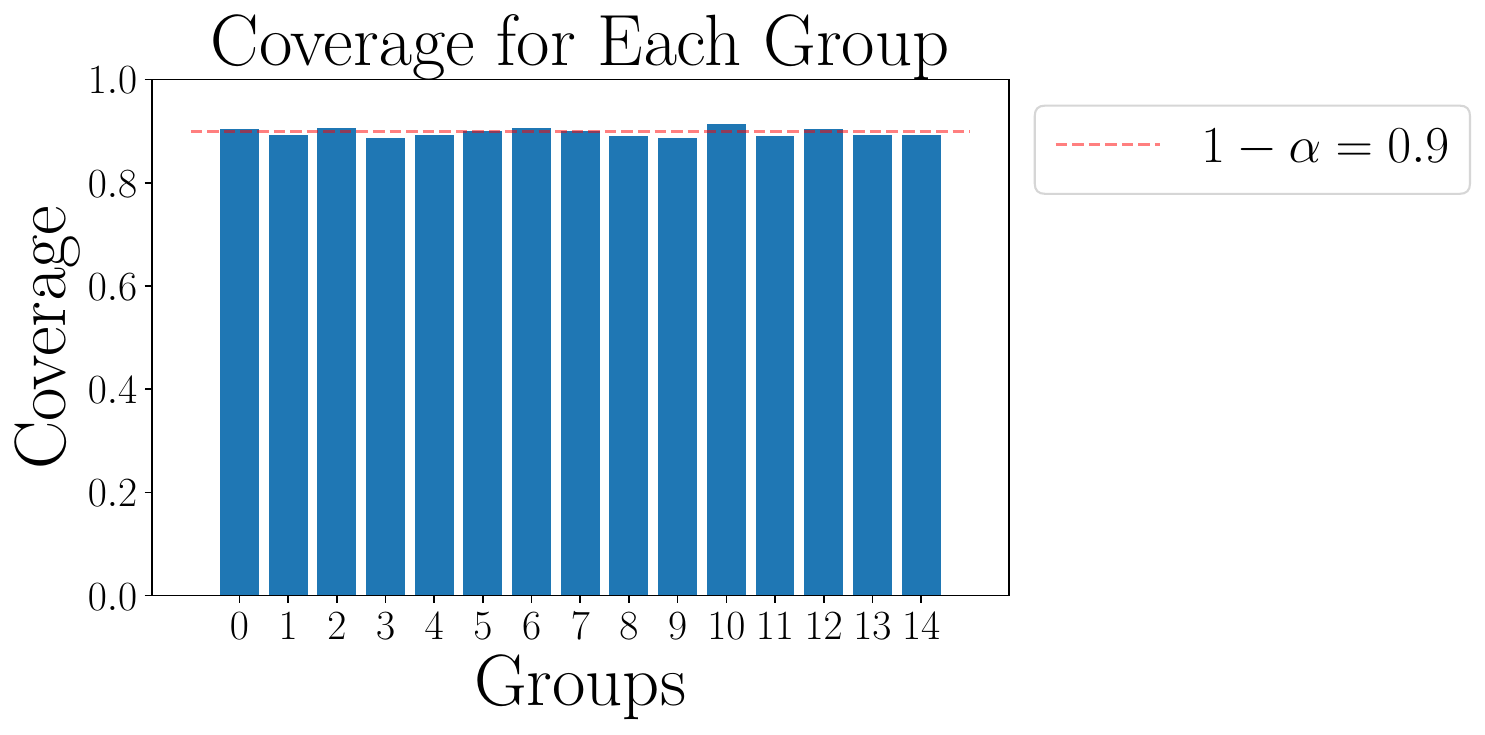}
    \end{subfigure}
    \caption{\textbf{Top figure.} The blue curve is the model prediction, blue dots are test data points, and prediction intervals are shown in orange. \revision{The dotted lines represent the CRC prediction intervals which have constant width}. \textbf{Bottom figure.} It shows the within-group coverage for each of the adaptively selected groups. The red line is the target coverage level. 
    The coverage is almost exact for all groups \revision{for AA-CRC while almost all groups are either undercovered or overcovered this the standard CRC method}.}
    \label{fig:cqr}
\end{figure}

\begin{figure*}[!ht]
    \centering
    \includegraphics[width=.7\textwidth]{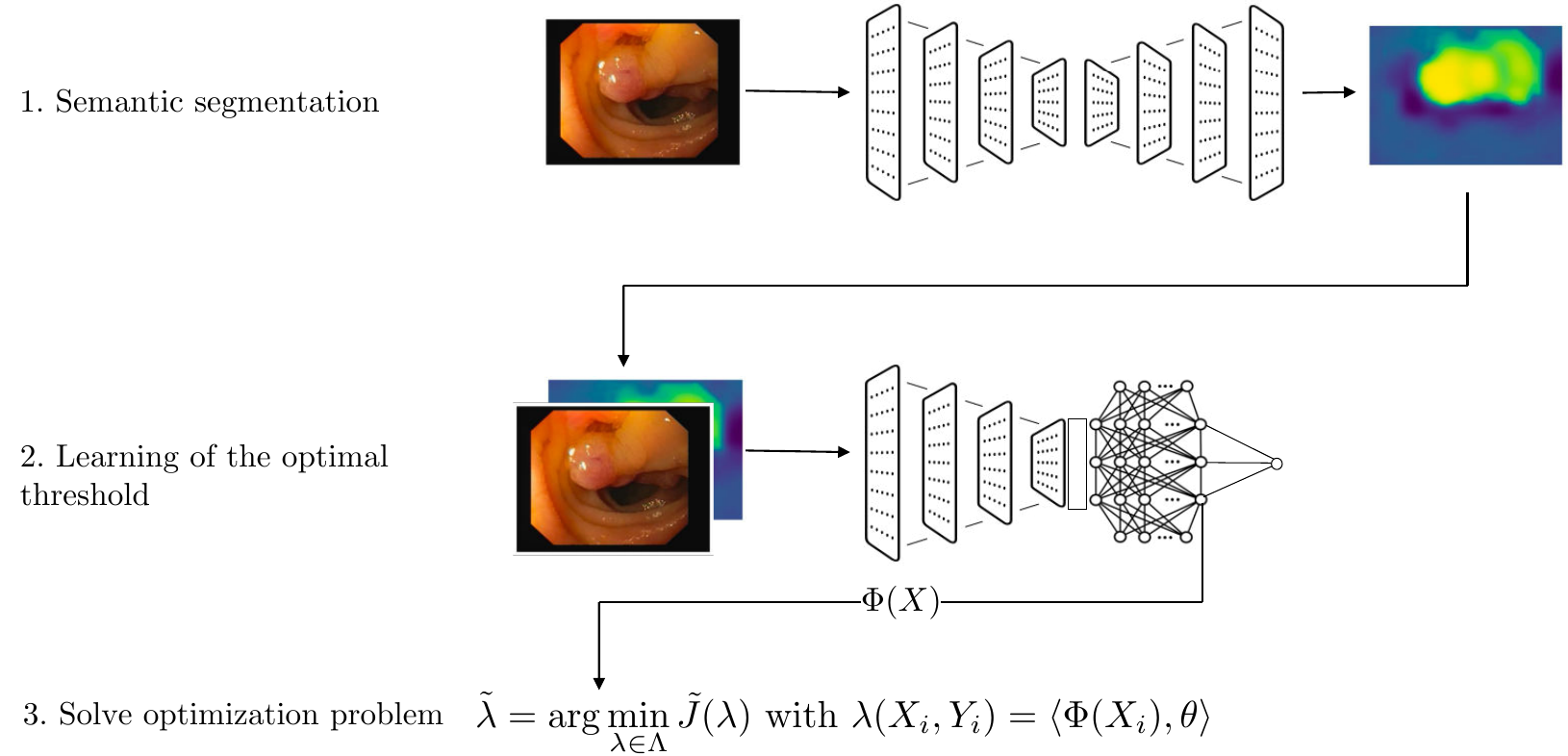}
    \caption{Procedure to create the embedding of the images. The \textbf{first step} is the training of the segmentation model on the $\mathcal{D}_{train}$ dataset. The \textbf{second step} is the learning of the embedding based on the segmentation output on the $\mathcal{D}_{res}$ dataset. The \textbf{third step} is the solving of the optimization procedure and the $\mathcal{D}_{cal}$ dataset.}
    \label{fig:embedding_procedure}
\end{figure*}

\subsection{Semantic segmentation}

In this setting, let $f$ be any semantic segmentation model which takes as input $x \in \mathbb{R}^{d_1 \times d_2 \times c}$ and predicts sigmoids $f(x) \in [0, 1]^{d_1 \times d_2}$. 
Our target is a binary segmentation mask in $\cY = \{0,1\}^{d_1 \times d_2}$, and for any $y \in \cY$, we abuse notation and refer to $|y| = \mathds{1}^{\top}y\mathds{1}$ as the sum of all the pixels.
The objective here is to control the recall of the segmentation model.
In particular, we index our final segmentation with threshold $u \in [0,1]$ as $\cC_{u}(x) \in \cY$, and $\cC_{u}(x)_{i,j} = \ind{f(x)_{i,j} \geq u}$.
With this in hand, the loss function $\ell$ is defined as follows:

\begin{equation}
    \ell(x, y, \lambda) = 1 - \frac{y \cap \mathcal{C}_{\lambda(x, y)}(x)}{|y|}.
\end{equation}



\paragraph{Polyp segmentation dataset.}
For this experiment, we used a PraNet \citep{fan2020pranet} model for the semantic segmentation and a ResNet-50 \citep{resnet} for the embedding learning.
We chose an embedding size of 1024.
Both models were trained on Kvasir-SEG \citep{ksavir} and CVC-ClinicDB \citep{cvc}, and the calibration and testing were performed on the CVC-300 \citep{cvc300}, CVC-ClinicDB, CVC-ColonDB \citep{colondb}, ETIS-LaribPolypDB \citep{etis}, and Kvasir datasets. In total, 1450 images were used for the training and embedding learning, and 798 images were used for calibration and testing. Results are reported in Figure~\ref{fig:main_polyp} with $\alpha = 0.1$. 


The mean and standard deviation of the recall over 100 random splits are 0.906 and 0.021 respectively. The average precision of the AA-CRC method is 0.457 versus 0.395 for standard CRC, showing a significant improvement of the precision while guaranteeing the same level of recall. \revision{The reason we showed the improvement in precision is that the improved conditional performance essentially allows us to improve the overall performance of the method: that is, subject to the same type-1 error $\alpha$, the adaptive procedure gets better type-2 error. This does imply better conditional performance, since more correct segmentations are being made, allowing us to allocate the error budget more efficiently.}

\begin{figure}[!ht]
    \centering
    \begin{subfigure}[b]{0.4\textwidth}
        \centering
        \includegraphics[width=\textwidth]{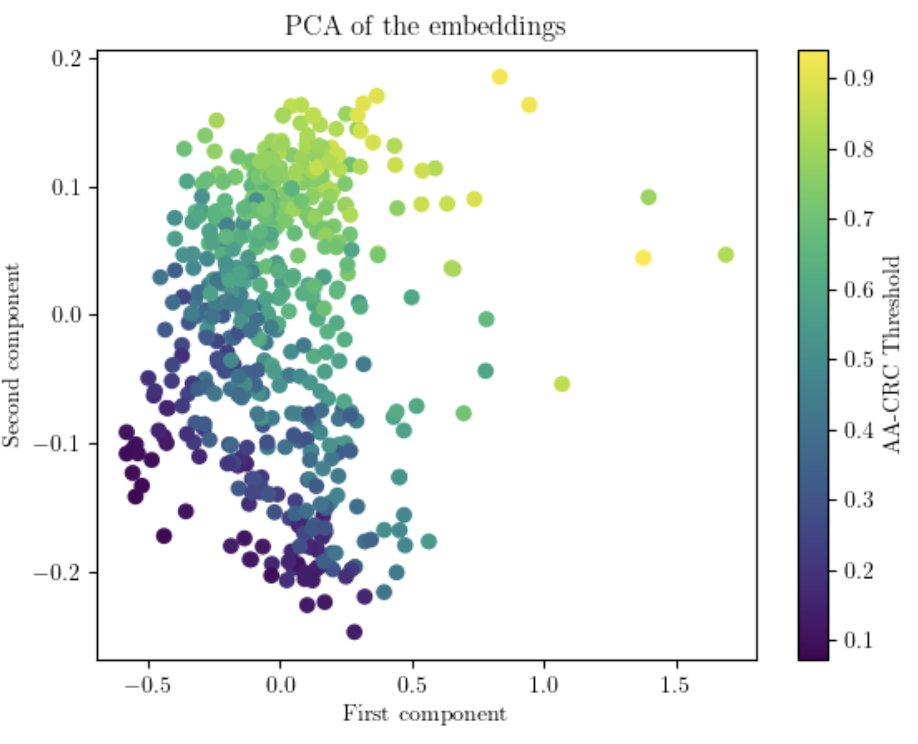}
    \end{subfigure}
    \hfill
    \begin{subfigure}[b]{0.4\textwidth}
        \centering
        \includegraphics[width=\textwidth]{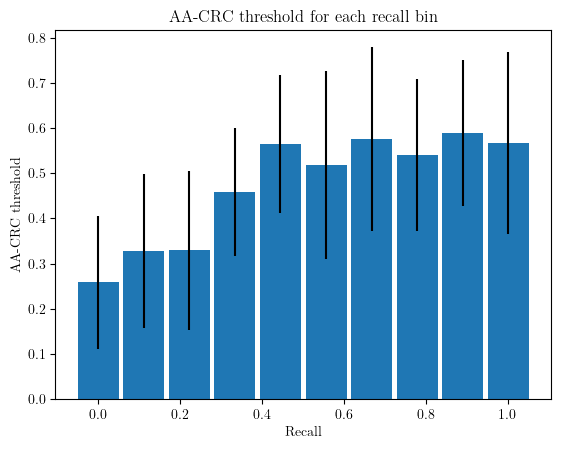}
    \end{subfigure}
    \caption{\revision{\textbf{Top figure.} Plot of the two first components of the PCA on the embeddings of the images. The color of each point correspond to the threshold returned by AA-CRC. \textbf{Bottom figure.} The right figure shows the within-group coverage for each of the adaptively selected groups. The red line is the target coverage level. 
    The coverage is almost exact for all groups for AA-CRC while almost all groups are either undercovered or overcovered this the standard CRC method}.}
    \label{fig:adaptiveness}
\end{figure}

\revision{Moreover, as shown in Figure~\ref{fig:adaptiveness}, when performing a PCA on the embeddings of the images and plot the two first components, and then attribute to each point the value of the threshold returned AA-CRC, we can visualize the adaptiveness of the methodology by seeing that the closer the embeddings, the closer the thresholds.}

\revision{We also evaluated the conditional performance of our method by analyzing the relationship between image recall at a fixed threshold of 0.5 and the adaptive thresholds assigned by the AA-CRC method. Recall at a fixed threshold reflects image difficulty: low recall indicates a challenging image requiring a lower threshold, while high recall suggests the threshold could be increased without degrading performance. To quantify this adaptiveness, we computed the Spearman correlation between recall at the fixed threshold and the AA-CRC-assigned thresholds. A positive correlation would indicate that AA-CRC assigns higher thresholds to easier images, demonstrating adaptive behavior. On the Polyp dataset, we observed a Spearman correlation of 0.41 with a p-value of $4\times10^{-23}$, confirming a positive relationship where higher recall generally corresponds to higher thresholds. To further illustrate this trend, we grouped images into recall bins from 0 to 1 in increments of 0.1, computing the average AA-CRC threshold and its standard deviation for each bin. A bar (Figure \ref{fig:adaptiveness} visualizes this relationship, showing an overall increase in thresholds as recall improves, reinforcing the adaptive nature of the AA-CRC method.}

\begin{figure*}[ht]
    \centering
    \begin{subfigure}[b]{\textwidth}
        \centering
        \includegraphics[width=.8\textwidth]{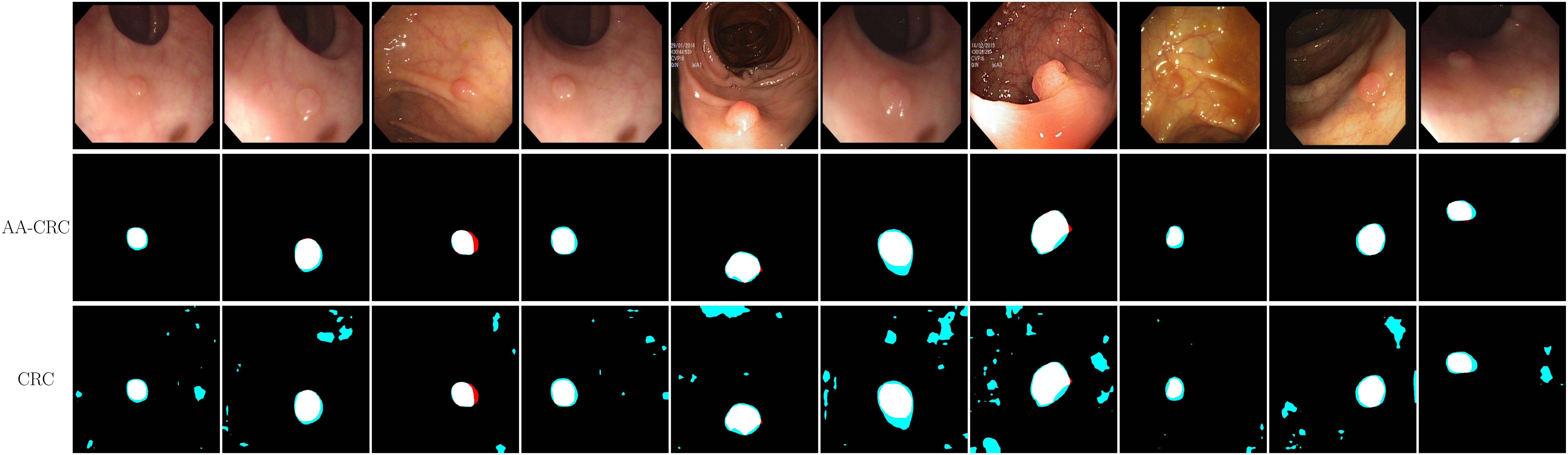}
    \end{subfigure}
    
    \begin{subfigure}[b]{0.3\textwidth}
        \centering
        \includegraphics[width=\textwidth]{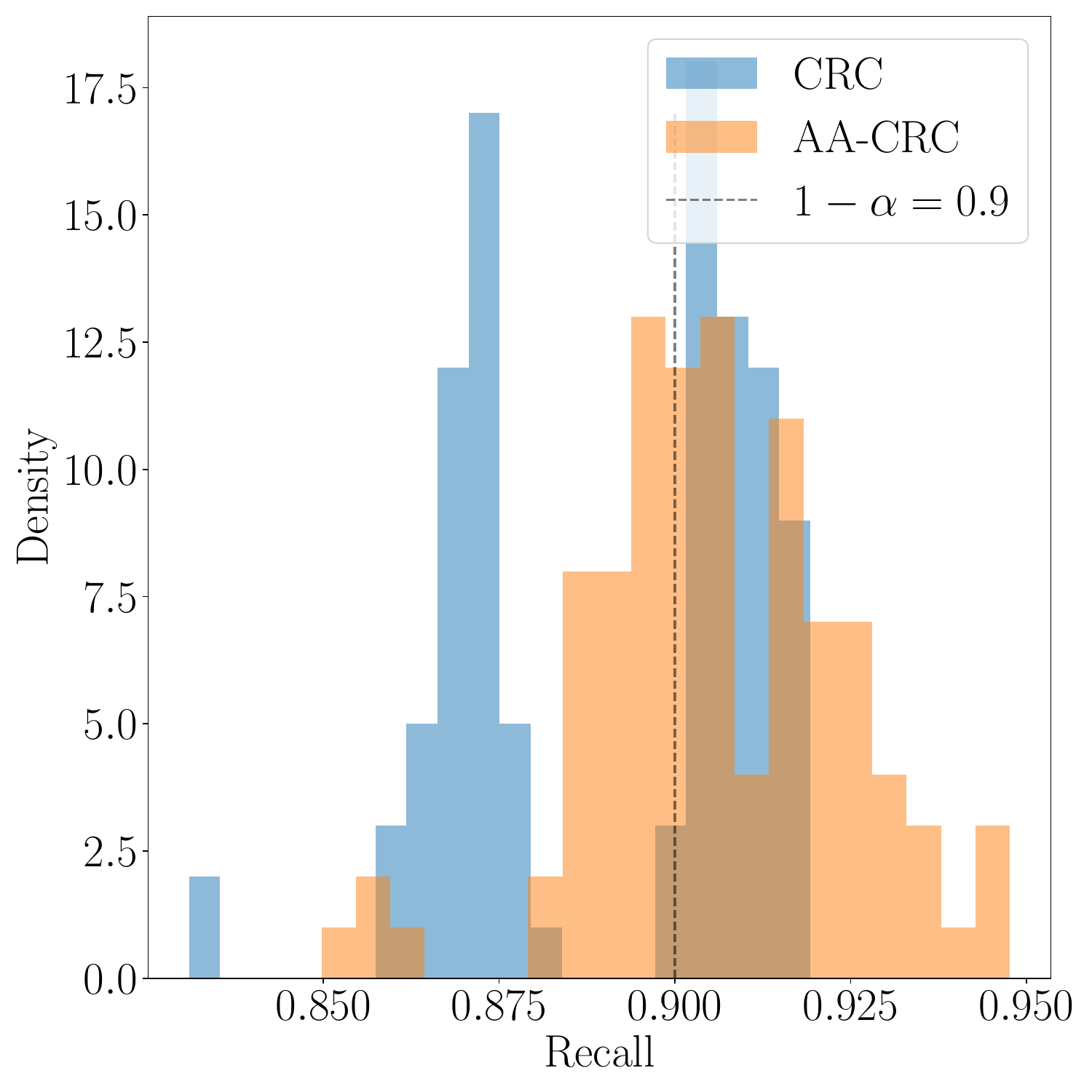}
    \end{subfigure}
    \begin{subfigure}[b]{0.3\textwidth}
        \centering
        \includegraphics[width=\textwidth]{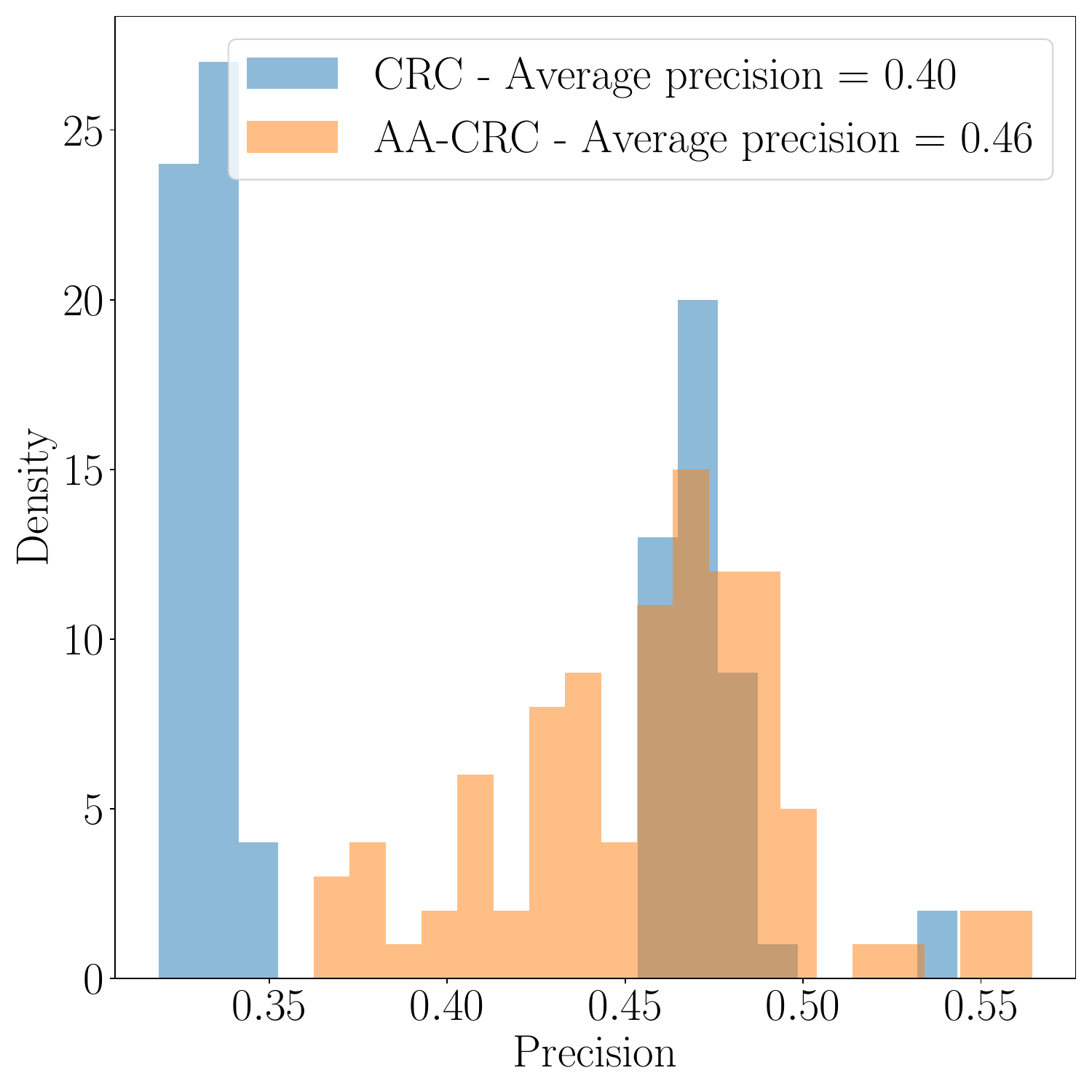}
    \end{subfigure}
    
    \caption{\textbf{Recall control for polyp segmentation.} The top figure compares the control of the recall made with our method (AA-CRC) to the control done with CRC. White pixels are true positives, blue pixels are false positives and red pixels are false negatives. The bottom figures represents the distribution of the recall of our procedure and distribution of the precision for both CRC and AA-CRC over 100 independent random data split.}
    \label{fig:main_polyp}
\end{figure*}

\paragraph{Fire segmentation dataset.}
We next perform experiments on fire segmentation from image data, using the dataset of~\citep{fire_dataset}.
We used a UNet~\citep{unet} for the segmentation backbone and a ResNet-50 to caculate the score embedding.
We chose the last hidden layer to have size 1024. 
We used 11671 images to train the UNet and ResNet-50 models, and respectively, 3432 and 6865 images for calibration and testing. 
To achieve better results in terms of precision, we performed a PCA~\citep{pca} on the embedding and added an intercept. 
The number of components was chosen such that explained variance ratio was equal to 0.85. 
Results are reported in Figure~\ref{fig:main_fire} with $\alpha=0.1$. 
The mean and standard deviation of recall over 100 random splits of the data are 0.898 and 0.003 respectively. 
The average precision of our method is 0.403, versus 0.363 for standard CRC, again improving the precision at the same recall level.

\begin{figure*}[!ht]
    \centering
    \centering
    \begin{subfigure}[b]{\textwidth}
        \centering
        \includegraphics[width=.8\textwidth]{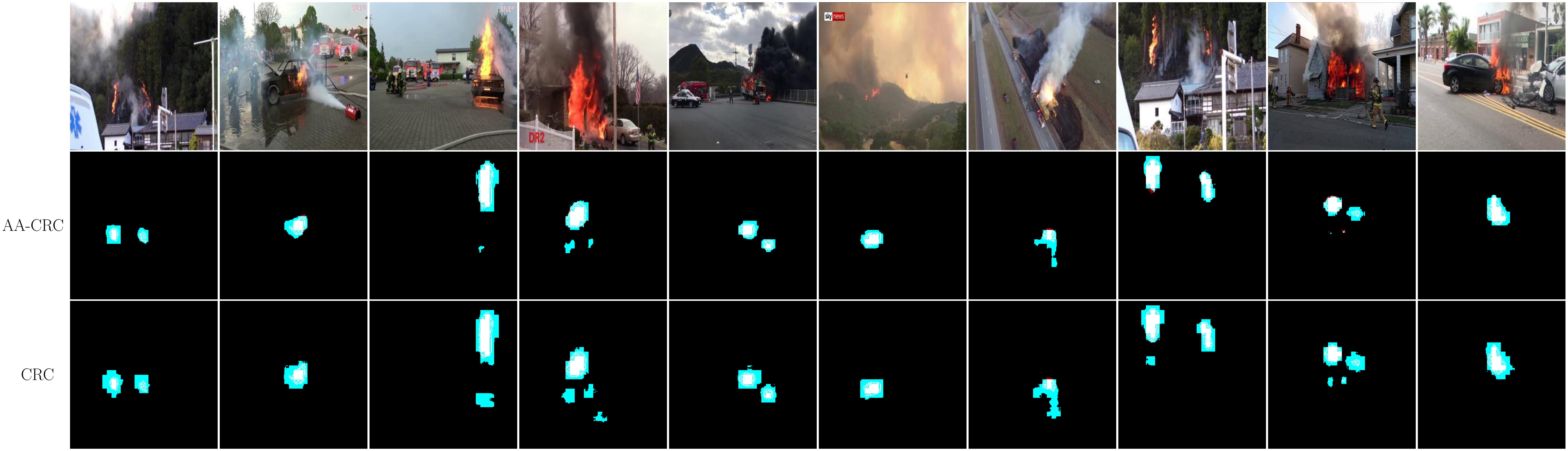}
    \end{subfigure}

    \begin{subfigure}[b]{0.3\textwidth}
        \centering
        \includegraphics[width=\textwidth]{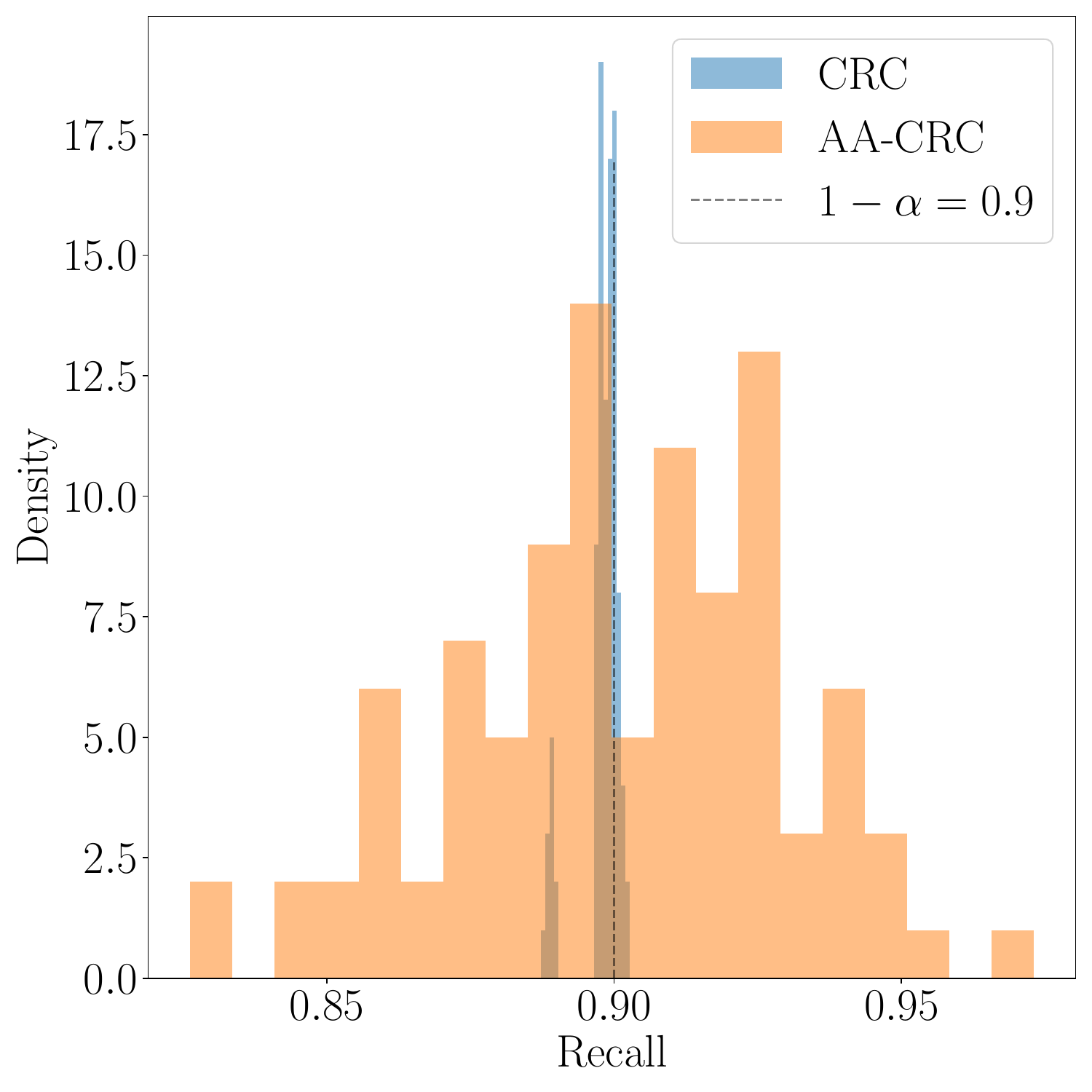}
    \end{subfigure}
    \begin{subfigure}[b]{0.3\textwidth}
        \centering
        \includegraphics[width=\textwidth]{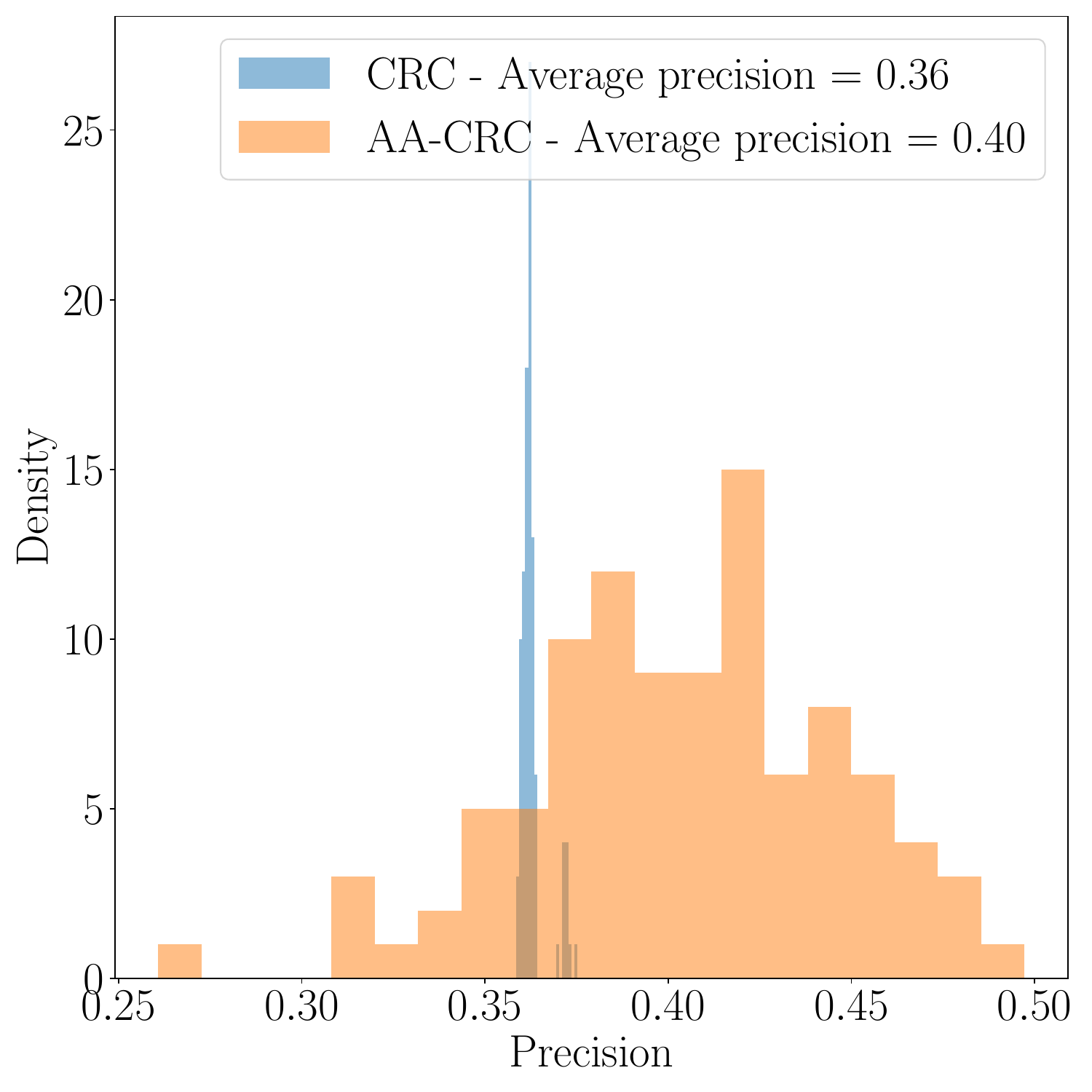}
    \end{subfigure}
    \caption{\textbf{Recall control for fire segmentation.} The top figure compares the control of the recall made with our method to the control done with CRC. White pixels are true positives, blues are false positives and reds are false negatives. The bottom figures represents the distribution of the recall of our procedure and distribution of the precision for both CRC and AA-CRC over 100 independent random data split.}
    \label{fig:main_fire}
\end{figure*}

\section{CONCLUSION AND FUTURE WORK}

We have presented a generalization of \cite{gibbs2023conformal}, AA-CRC that handles monotonic risks and adaptively chosen groups. 
We demonstrated the benefits of AA-CRC through the improvement of the precision in semantic segmentation tasks while controlling the recall. Moreover, we proposed a systematic methodology, for both tabular and image data, to construct adaptive function classes $\Lambda$ without needing any \emph{a priori} knowledge.
Future work will focus on the extension of this methodology to multiclass semantic segmentation, as well as using exploring additional choices of function spaces $\Lambda$.
Proving extensions of the remaining theorems in~\cite{gibbs2023conformal}---such as the bound in their (3.3), would also be available via standard analysis (e.g., via analyzing the same jump function as in~\cite{angelopoulos2024conformal}).
\clearpage
\newpage
\bibliographystyle{apalike}
\bibliography{bibliography}

\end{document}